\newtheorem{defn}{Definition}
\newtheorem{prop}{Proposition}
\newcommand{\Action}[2]  {\ensuremath{\mathit{#1}^{#2}}}
\newcommand{\preP}[1]    {\ensuremath{\Action{pre}{#1}_+}}
\newcommand{\preN}[1]    {\ensuremath{\Action{pre}{#1}_-}}
\newcommand{\add}[1]     {\ensuremath{\Action{add}{#1}}}
\newcommand{\del}[1]     {\ensuremath{\Action{del}{#1}}}
\newcommand{\sharedPreP} {\preP{}}
\newcommand{\sharedPreN} {\preN{}}
\newcommand{\sharedAdd}  {\add{}}
\newcommand{\sharedDel}  {\del{}}
\newcommand{\GSD}{\ensuremath{\mathcal{GSD}}}
\newcommand{\MxGD}{\ensuremath{\mathcal{GD}^{\uparrow}}}
\newcommand{\MnGD}{\ensuremath{\mathcal{GD}^{\downarrow}}}
\title{Reducing Human-Robot Goal State Divergence with Environment Design}
\author{Kelsey Sikes$^{1}$, Sarah Keren$^{2}$ and Sarath Sreedharan$^{1}$
\thanks{$^{1}$Department of Computer Science, Colorado State University
        }%
\thanks{$^{2}$The Taub Faculty of Computer Science, Technion-Israel Institute of Technology%
}}
\begin{document}

\maketitle

\begin{abstract}
One of the most difficult challenges in creating successful human-AI collaborations is aligning a robot’s behavior with a human user's expectations. When this fails to occur, a robot may misinterpret their specified goals, prompting it to perform actions with unanticipated, potentially dangerous side effects. To avoid this, we propose a new metric we call Goal State Divergence $\mathcal{(GSD)}$, which represents the difference between a robot’s final goal state and the one a human user expected. In cases where $\mathcal{GSD}$ cannot be directly calculated, we show how it can be approximated using maximal and minimal bounds. We then input the $\mathcal{GSD}$ value into our novel human-robot goal alignment (HRGA) design problem, which identifies a minimal set of environment modifications that can prevent mismatches like this. To show the effectiveness of $\mathcal{GSD}$ for reducing differences between human-robot goal states, we empirically evaluate our approach on several standard benchmarks.
\end{abstract}



\section{Introduction}
As Artificial Intelligence (AI) continues to advance and become a more ubiquitous part of society, human-robot interactions are becoming increasingly common. As a result, designing robots that exhibit behavior that conforms to human expectations is becoming more important than ever. Previous work (cf.  \cite{ZhangPlanExpPred,Chakraborti2017ModelReconciliation}) has shown how addressing expectation mismatches lies at the heart of many human-AI interaction problems. In this paper, we will look at problems that might arise when there are differences between the potential goal states a human user expects a robot to achieve and those it might achieve. 


Specifically, when the user provides a goal specification, they would have some expectation of the exact goal states that might satisfy them. 
However, the behavior the robot may choose in response to such a goal specification may result in a state that differs significantly from what the user expected in the characteristics not strictly provided in their specification. This in turn may result in unanticipated side effects, which in severe cases, could threaten human safety. Such expectation mismatches may arise for diverse reasons, including the human user misunderstanding the robot's state and capabilities or even limitations in their inferential capabilities.
\begin{figure}[h]
    \centering
    \includegraphics[width=.35\textwidth]{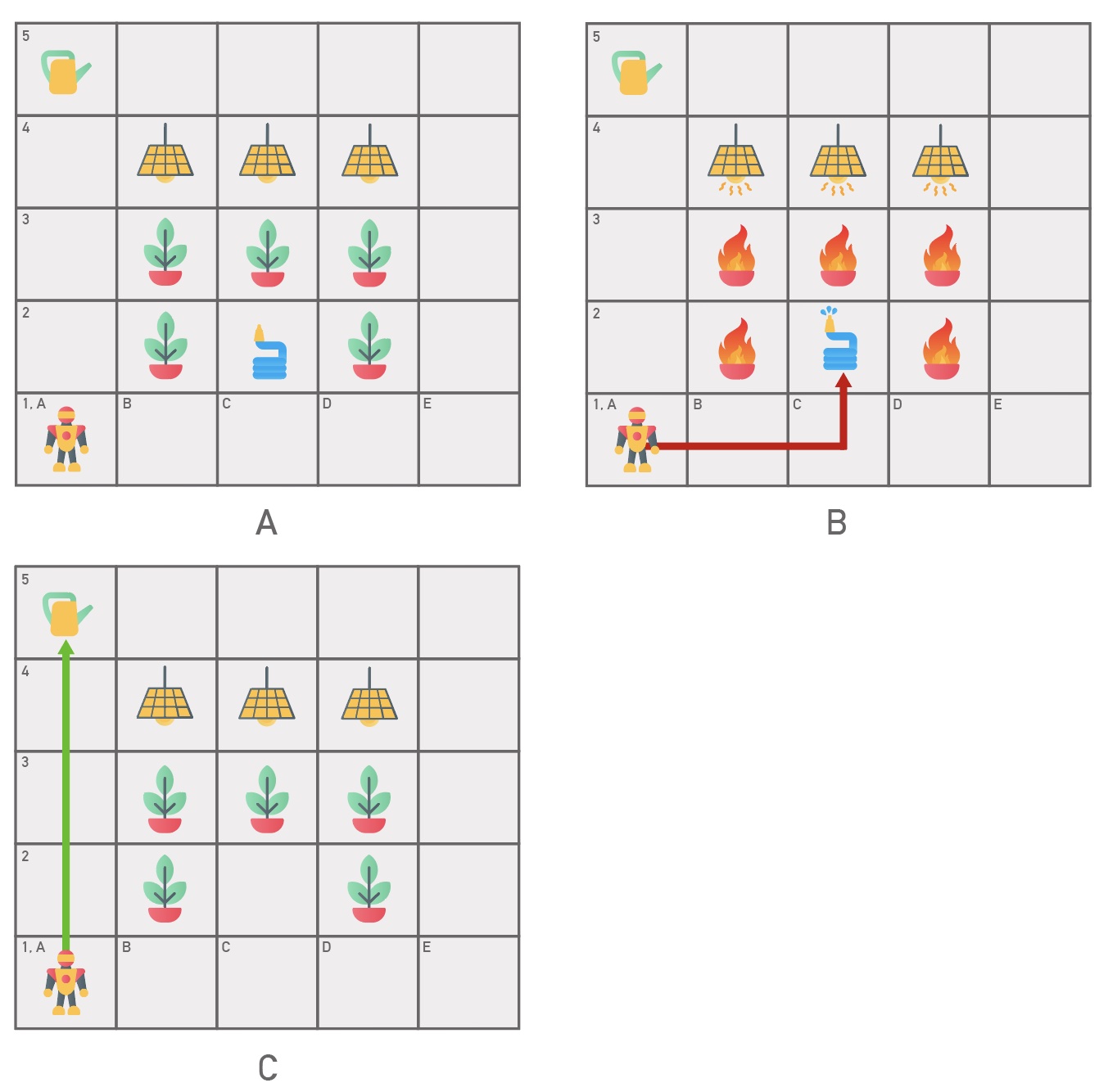}
    \caption{In a greenhouse setting, a human asks a robot to water plants based on their incorrect beliefs about its model. As a result, the robot follows the least costliest plan and chooses to water the plants with a hose, causing a fire. Using environment design, the hose is removed from the scene to avoid potential safety issues.
    }
    \label{fig:fib}
\end{figure}
In this paper, we explore how environment design \cite{ZhangParkes2008} can be used to avoid such potential expectation mismatches. In particular, we look for ways to modify the environment to ensure that the difference between what the human user expects the robot to achieve and what the robot truly achieves is minimized for a given goal specification.
We do so by driving the design process to minimize a novel metric called Goal State Divergence ($\mathcal{GSD}$), which identifies the discrepancy between the final goal state expected by the human and what can be achieved by the robot. 


However, even under generous assumptions about the knowledge the designer has access to, estimating true $\mathcal{GSD}$ presents a unique challenge because the actual human plan may be unknown. In this paper, we instead aim to approximate the magnitude of the divergence through bounds and identify potential environmental modifications that can minimize it. 
Our paper also introduces novel classical planning-based compilations that can identify these bounds for a given design problem. 
To summarize, the primary contributions of this paper are as follows
\begin{enumerate}
    \item We introduce a novel metric to characterize discrepancies between human-robot  goal states in a given planning problem.
    
    \item We develop approximations of the given metric and show how they can be effectively calculated using compilations to classical planning.

    \item We introduce a novel design problem that leverages these approximations to minimize potential final goal state discrepancies.

    \item We present a comprehensive empirical evaluation of our proposed method on standard  benchmarks.
\end{enumerate}

\section{Related Work}
Environment design shapes a robot’s actions by modifying its environment to maximize or minimize some objective \cite{ZhangParkes2008, GRDDetermin}. Several early works in utilizing design in settings where the robots correspond to planning agents have focused primarily on using them to facilitate better goal and plan recognition \cite{KerenGRD2014, MirskyGoalPlanRec}. 
Many of these works have relied primarily on heuristic search methods to identify such designs. \cite{KerenGalKarpasPinedaZilberstein2017} looked at using design to maximize robot objectives in uncertain, stochastic environments, and \cite{KerenEquiUtil} leveraged it to find the maximum shared agent-designer utility in Equi-Reward Utility Maximizing Design (ER-UMD) settings. For the latter, \cite{Keren_Pineda_Gal_Karpas_Zilberstein_2021} extends this work by limiting the space of possible modifications, then mapping each one to a dominating modification. This avoids having to calculate all possible modifications. In our work, we propose a similar, method by identifying a bounded subset of modifications that meets certain criteria regarding the bounds on $\mathcal{GSD}$.

Many works have looked at approaches like value alignment \cite{cirl, MalekValueAlignment} and avoiding side effects \cite{ConcreteProbs, FirstLaw, Gridworlds,klassen2023epistemic,klassen2022planning} as a means of ensuring safe behavior (the implicit assumption being that any behavior that avoids a certain set of states corresponding to the side-effect, will not cause any harm). Many of these works either assume access to a set of locked features or rely on directly querying the user to identify these features (cf. \cite{QuerySideEffects}). Methods like those proposed by \cite{MitNegSideEffects} directly ask humans to update the environment to avoid negative side effects. Unfortunately, this method is hindered by the extensive human intervention it requires. Our method avoids such direct querying by instead relying on a specification of the human model to select an environment modification without requiring any further human intervention. 
We can learn such models by leveraging existing work on learning human mental models (cf. \cite{ModelFreeRec}), in addition to all the works in learning planning models, in general \cite{callanan2022macq}.
Once the human domain knowledge is learned, it can be reused for multiple tasks. Additionally, in many cases, a set of people may share the same model, and we don't necessarily need to learn a unique model for each user \cite{soni2021users}.

Another related area of research is that of explicable planning \cite{ZhangPlanExpPred,KulkarniMinDistExpBeh}, where a robot tries to generate plans aligned with a human’s expectations about what plans the robot may choose.  
 Recently, explicable planning has also been used to mitigate safety issues caused by human-AI model mismatches \cite{hanni2023safe}, where a designer-specified safety bound is used to guarantee that an agent will never select an unsafe behavior. 
Environment design has also been applied to boost the ability of robots to generate explicable planning (cf. \cite{KulkarniDesEnvIntBeh}).
Note that all the previously mentioned explicable planning methods generally focus on matching the human's expectations about the plan as a whole with the final plan carried out by the robot. On the other hand, we solely concentrate on matching the robot’s final goal state with the human’s expectations about potential final goal states and ignore the actual plans that may be used by the robot or expected by the human to achieve them.

\section{Background}
In this section, we define the basic planning terminologies we will be using throughout the paper. 
We define a planning model using the tuple $\mathcal{M} = \langle \mathcal{D}, \mathcal{I}, \mathcal{G}\rangle$. Here $\mathcal{D}$ corresponds to the domain associated with the model and is further defined by the tuple $\mathcal{D} = \langle \mathcal{F, A}, c\rangle$. $\mathcal{F}$ corresponds to the set of propositional fluents that describes the state space corresponding to the given planning problem, such that any state $s$ in that space can be uniquely represented by the set of fluents that are true (i.e. $s \subseteq \mathcal{F}$, for all states $s$).
$\mathcal{A}$ is a set of executable robot actions represented as the tuple $a = \langle \sharedPreP(a), \sharedPreN(a), add(a), del(a)\rangle$. For each action $a \in \mathcal{A}$, $pre_{+/-}(a) \subseteq \mathcal{F}$ are the set of positive or negative preconditions that must be satisfied before $a$ can be executed, while $add(a)$ and $del(a)$ represent sets of add and delete effects for each action $a$. $c$ corresponds to the cost associated with each action.
Finally, $\mathcal{I}$ is the initial state, and $\mathcal{G} \subseteq F$ is the goal specification (which is a partial state specification and not necessarily a state).
We define the effects of executing an action at a given state using a transition function $\mathcal{T}_{\mathcal{M}}: 2^\mathcal{F}\times A \rightarrow 2^\mathcal{F}$, which is given by 
\[ \mathcal{T}_\mathcal{M}(s,a) = \begin{cases}
   s\cup add(a) \ \setminus \ del(a) ~\textrm{ if } exec(s,a) \\
   \textrm{\em{undefined} } ~~~~~~~~~~~~~~~~~\textrm{otherwise}
\end{cases}\]
where $exec(s,a)$ returns true if $s \supseteq \sharedPreP(a)$ and $s \not\supseteq \sharedPreN(a)$. 
We will overload the notation and use the transition function to also be applicable to action sequences, such that $\mathcal{T}_{\mathcal{M}}(s, \langle a_1,...,a_k\rangle) = \mathcal{T}_{\mathcal{M}}(...(\mathcal{T}_{\mathcal{M}}(s,a_1),......,a_k))$.


A solution to a planning problem is a plan, which is an action sequence whose execution in the initial state results in a state that satisfies the goal specification, i.e., $\pi$ is a plan if $\mathcal{T}_{\mathcal{M}}(\mathcal{I},\pi) \supseteq \mathcal{G}$.
We will refer to a state that satisfies the goal specification as a goal state.
Each action in this plan has a cost; summing these reveals the cost of a plan, denoted by $c(\pi) = \sum_{a_i \in \pi} c(a_i)$. A plan is considered optimal if there exists no other plan with a lower cost, and we will represent the set of optimal plans for a model $\mathcal{M}$, with the notation $\Pi^*_\mathcal{M}$ and use $\Pi_\mathcal{M}$ to denote the set of all plans.

\section{Running Example}
Consider a robot operating in a greenhouse, tasked with completing various chores to maintain its operation. Here, a human assigns tasks to the robot based on their beliefs about its current state and capabilities. The robot then seeks to accomplish these tasks by following a plan that it believes will achieve the specified objective. In the best-case scenario, this plan may result in a goal state which is perfectly aligned with what the human was expecting or just lead to a few minor inconveniences. However, in the worst-case scenario, the robot could carry out a plan with potentially dangerous effects that the human did not anticipate.

As a specific example, consider a scenario where a human asks a robot to water a section of plants, sitting under a series of heat lamps. In providing this goal specification, the human expects the robot to carefully use a watering pail to water the plants, ensuring they remain adequately hydrated. Instead, the robot grabs a nearby hose and haphazardly sprays the plants, splashing water all over --- including onto the heat lamps. This sudden change in temperature causes thermal shock, resulting in the heat lamps shattering and releasing sparks onto the plants below. Moments later, the plants ignite, setting the greenhouse on fire. The human who was not aware that the robot could use hoses completely overlooked this possibility.


To avoid situations like this, environment design can be a useful tool for influencing a robot's decision-making. In the greenhouse setting, for example, the robot's choice of what tool to water the plants with could have been dictated by their placement. Here, the watering pail could have been placed closer to the robot --- increasing the possibility it'd be used --- whereas the hose could have been placed farther away or left in an inaccessible position. Additionally, the heat lamps could have been outfitted with protective covers to ensure they remained shielded from any direct contact with water while the plants were being cared for. Our objective through this paper will be to design algorithms that can automatically identify such potential designs and limit potential mismatches.

\section{Design to Reduce Goal State Divergence}
As discussed, the mismatch between the human's perception of the robot's capabilities/state and the reality could lead to users misspecifying their objective, potentially resulting in unanticipated outcomes. 
To develop methods that can account for and avoid such unintended outcomes, we first need to develop metrics to quantify the degree of mismatch. Specifically, we will start by looking at pairs of states.

\begin{defn}
\textit{Given any two states, $s^1, {s}^2$, state divergence (SD) is defined as the symmetric difference\footnote{The symmetric difference between two states is the number of elements present in either state but not both, which we denote using $\Delta$.} between their respective fluents, i.e.:}
\begin{align*} 
\mathcal{SD}(s^1, s^2) = s^1 \Delta \  s^2 
\end{align*}
\end{defn}

In this paper, we are not interested in just measuring the difference between two arbitrary states but rather the goal state expected by the human and the goal state that the robot can achieve. One could make the case that the intermediate states the robot passes through are as important as the final goal state for many safety applications. However, it is important to note that a purely goal-based specification is general enough to account for such considerations easily. We can introduce new fluents that track intermediate states, and their value in the goal state can be used to account for whether the robot visited any undesirable intermediate state. This requires us to measure the difference in states achievable across models:

\begin{defn}
\textit{For a pair of models that are not necessarily distinct, $\mathcal{M}^1$ and $\mathcal{M}^2$, let $\pi^1$ be a valid plan in $\mathcal{M}^1$, and $\pi^2$ be a valid plan in $\mathcal{M}^2$. Given this, goal state divergence ($\GSD$) of the plan-model pairs is defined as the state divergence between the final state of these two plans, i.e.:}
\begin{align*} 
\GSD(\pi^1, \mathcal{M}^1, \pi^2, \mathcal{M}^2) = \mathcal{SD} ( \mathcal{T}_{\mathcal{M}^1}(\mathcal{I}^1, \pi^1) \ , \ \mathcal{T}_{\mathcal{M}^2}(\mathcal{I}^2, \pi^2) ) 
\end{align*} 
\end{defn}

In our setting, these two models correspond to the robot model $\mathcal{M^R} = \langle \mathcal{D^R}, \mathcal{I^R}, \mathcal{G^R}\rangle$, and the human's model of the robot and the task $\mathcal{M^H}= \langle \mathcal{D^H}, \mathcal{I^H}, \mathcal{G^H}\rangle$. 
We are specifically looking at cases where the robot is trying to follow the goal specification provided by the human exactly, and thus, we have $\mathcal{G^H} = \mathcal{G^R}$.
To simplify the notations, we will also assume that the human and robot models share the same fluent set $\mathcal{F}$. Let $\pi^\mathcal{R}$ be the robot plan and $\pi^\mathcal{H}$ be the plan expected by the human. The central metric of interest for this paper then becomes $\GSD(\pi^\mathcal{H}, \mathcal{M^H}, \pi^\mathcal{R}, \mathcal{M^R})$.

 Note that calculating the above difference requires the system to have access to the human model $\mathcal{M^H}$ and plan $\pi^\mathcal{H}$.  As discussed, there are model learning methods we could employ to learn $\mathcal{M^H}$; Additionally, we will assume that the human's model is given because under many structured settings, the human model may be known beforehand. In the greenhouse case, if the human has been working with a previous model of the robot, their beliefs about the robot would be heavily influenced by the model's capabilities. 
 
 It is worth noting that access to a human model doesn't mean that the robot could potentially avoid goal state divergence by executing plans that are valid in both models since such a plan might not exist.
 Coming to $\pi^\mathcal{H}$, even with a known $\mathcal{M^H}$, the exact plan the human chooses may not be known beforehand because multiple plans may satisfy a goal state, any of which the human could choose. 



In cases in which it is not possible to compute $\GSD$ exactly, we will instead consider approximations. The first approximation we will consider is the worst-case approximation, where we will look at the maximum divergence possible between an expected human goal state and what the robot can achieve, more formally,
\begin{defn}
\textit{For two given models, $\mathcal{M}^1$, $\mathcal{M}^2$, the worst-case or maximal goal state divergence ($\MxGD$) is given by the cardinality of the maximum goal state divergence possible between all executable plans in $\mathcal{M}^1$, $\Pi_{{\mathcal{M}^1}}$, and $\mathcal{M}^2$, i.e.:}
\begin{align*}
\MxGD(\mathcal{M}^1, \mathcal{M}^2) = \max_{ \pi^1 \in \Pi_{{\mathcal{M}^1}}, \pi^2 \in \Pi_{{\mathcal{M}^2}} }(|\GSD(\pi^1, \mathcal{M}^1, \pi^2, \mathcal{M}^2)|) 
\end{align*}
\end{defn}

This brings us to our first proposition which states that $\MxGD$ is guaranteed to be an upper bound of the true goal state divergence.
\begin{prop}
\label{prop:upper}
    \textit{For the robot and human model pair $\mathcal{M^R}$ and $\mathcal{M^H}$, the maximal goal state divergence is guaranteed to be greater than or equal to the goal state divergence for the human plan $\pi^\mathcal{H}$ and the robot plan $\pi^\mathcal{R}$, i.e.,  $\MxGD(\mathcal{M^H}, \mathcal{M^R}) \geq |\GSD(\pi^\mathcal{H}, \mathcal{M^H}, \pi^\mathcal{R}, \mathcal{M^R})|$.}
\end{prop}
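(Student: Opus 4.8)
The plan is to derive the inequality directly from the definition of $\MxGD$ as a maximum taken over a set of plan pairs, by exhibiting the particular pair $(\pi^\mathcal{H}, \pi^\mathcal{R})$ as a member of that set and then invoking the extremal property of the maximum.

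First I would observe that, by hypothesis, $\pi^\mathcal{H}$ is a valid plan in the human model $\mathcal{M^H}$ and $\pi^\mathcal{R}$ is a valid plan in the robot model $\mathcal{M^R}$. By the definition of a plan from the Background section, validity is exactly the statement that $\pi^\mathcal{H} \in \Pi_{\mathcal{M^H}}$ and $\pi^\mathcal{R} \in \Pi_{\mathcal{M^R}}$, where $\Pi$ denotes the set of all plans for the respective model. In particular both sets are non-empty, so the maximum defining $\MxGD(\mathcal{M^H}, \mathcal{M^R})$ ranges over a non-empty set and is well-defined. I would then note that the specific pair $(\pi^\mathcal{H}, \pi^\mathcal{R})$ is one of the feasible pairs $(\pi^1, \pi^2) \in \Pi_{\mathcal{M^H}} \times \Pi_{\mathcal{M^R}}$ over which that maximum is taken.

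Since the maximum of a set is an upper bound on every one of its elements, it follows immediately that $|\GSD(\pi^\mathcal{H}, \mathcal{M^H}, \pi^\mathcal{R}, \mathcal{M^R})| \leq \MxGD(\mathcal{M^H}, \mathcal{M^R})$, which is precisely the claimed inequality. The argument is thus a one-line appeal to the definition, and I do not expect any substantive obstacle. The only point that warrants care is the implicit well-definedness of the maximum: it must be taken over a non-empty set, which is guaranteed here exactly because we are given that the valid plans $\pi^\mathcal{H}$ and $\pi^\mathcal{R}$ exist. Were either plan set empty the maximum would be undefined and the statement vacuous, so noting the existence of these witness plans is the one assumption the proof genuinely relies on.
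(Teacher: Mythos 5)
Your proof is correct and is exactly the argument the paper intends: the paper dismisses this proposition as following "trivially from the definition of $\MxGD$," and your proposal simply spells out that the pair $(\pi^\mathcal{H}, \pi^\mathcal{R})$ lies in $\Pi_{\mathcal{M^H}} \times \Pi_{\mathcal{M^R}}$ and is therefore dominated by the maximum. Your added remark about non-emptiness of the plan sets is a sound (if minor) point of care that the paper leaves implicit.
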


The validity of the above proposition can be trivially proven from the definition of $\MxGD$. From the proposition, we can assert that one way to reduce goal state divergence is to reduce $\MxGD$. Especially, if we can reduce $\MxGD$ to zero, we are guaranteed that $\GSD$, will be an empty set. 

However, $\MxGD$ could be a loose upper bound, and reducing $\MxGD$ will not necessarily always reduce $\GSD$. Another approximation we could use is the lower bound on $\GSD$. We define this measure similar to $\MxGD$, but now focusing on minimizing the divergence.

\begin{defn}
\textit{For two given models, $\mathcal{M}^1$, $\mathcal{M}^2$, the best-case or minimal goal state divergence ($\MnGD$) is given by the cardinality of the minimum goal state divergence possible between all executable plans in $\mathcal{M}^1$, $\Pi_{{\mathcal{M}^1}}$, and $\mathcal{M}^2$, i.e.:}
\begin{align*}
\MnGD(\mathcal{M}^1, \mathcal{M}^2) = \min_{ \pi^1 \in \Pi_{{\mathcal{M}^1}}, \pi^2 \in \Pi_{{\mathcal{M}^2}} }(|\GSD(\pi^1, \mathcal{M}^1, \pi^2, \mathcal{M}^2)|) 
\end{align*}
\end{defn}

Similar to Proposition \ref{prop:upper}, we can assert that $\MnGD$ provides a lower bound.
\begin{prop}
\label{prop:lower}
    \textit{For the robot and human model pair $\mathcal{M^R}$ and $\mathcal{M^H}$, the minimal goal state divergence is guaranteed to be less than or equal to the goal state divergence for the human plan $\pi^\mathcal{H}$ and the robot plan $\pi^\mathcal{R}$, i.e.,  $\MnGD(\mathcal{M^H}, \mathcal{M^R}) \leq |\GSD(\pi^\mathcal{H}, \mathcal{M^H}, \pi^\mathcal{R}, \mathcal{M^R})|$.}
\end{prop}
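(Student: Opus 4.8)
The plan is to mirror the argument sketched for Proposition~\ref{prop:upper} and derive the bound directly from the definition of $\MnGD$ as a minimization over plan pairs. First I would recall that $\MnGD(\mathcal{M^H}, \mathcal{M^R})$ is defined as $\min_{\pi^1 \in \Pi_{\mathcal{M^H}}, \pi^2 \in \Pi_{\mathcal{M^R}}} |\GSD(\pi^1, \mathcal{M^H}, \pi^2, \mathcal{M^R})|$, i.e., the smallest divergence cardinality achievable across \emph{all} pairs consisting of a plan valid in the human model and a plan valid in the robot model.

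The crucial step is to observe that the specific pair $(\pi^\mathcal{H}, \pi^\mathcal{R})$ appearing in the statement is a legitimate member of this minimization domain: by the setup, $\pi^\mathcal{H}$ is a valid plan in $\mathcal{M^H}$ (hence $\pi^\mathcal{H} \in \Pi_{\mathcal{M^H}}$) and $\pi^\mathcal{R}$ is a valid plan in $\mathcal{M^R}$ (hence $\pi^\mathcal{R} \in \Pi_{\mathcal{M^R}}$). Since a minimum over a set is less than or equal to the value attained at any particular element of that set, instantiating the minimization at this pair immediately yields $\MnGD(\mathcal{M^H}, \mathcal{M^R}) \leq |\GSD(\pi^\mathcal{H}, \mathcal{M^H}, \pi^\mathcal{R}, \mathcal{M^R})|$, which is exactly the claim.

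I expect no substantive obstacle here; as with Proposition~\ref{prop:upper}, the result is essentially immediate from the definition. The only point requiring care is confirming that $\pi^\mathcal{H}$ and $\pi^\mathcal{R}$ indeed lie in their respective plan sets, but this is guaranteed by assumption, since $\pi^\mathcal{H}$ is introduced as the plan expected by the human (valid in $\mathcal{M^H}$) and $\pi^\mathcal{R}$ as the robot plan (valid in $\mathcal{M^R}$). One subtlety worth a brief remark is that the bound holds for the full plan sets $\Pi_{\mathcal{M^H}}$ and $\Pi_{\mathcal{M^R}}$ rather than merely the optimal plans, which is precisely what makes $\MnGD$ a genuine lower bound irrespective of which plan the human ultimately selects.
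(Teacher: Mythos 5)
Your proof is correct and takes the same route as the paper: the paper also treats this as an immediate consequence of the definition of $\MnGD$, since a minimum over all plan pairs in $\Pi_{\mathcal{M^H}} \times \Pi_{\mathcal{M^R}}$ is bounded above by the value at the particular pair $(\pi^\mathcal{H}, \pi^\mathcal{R})$. You have simply spelled out the instantiation step that the paper leaves implicit when it states the result is proven "similar to Proposition~\ref{prop:upper}," trivially from the definition.
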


We can characterize our unknown $\GSD$, using this upper bound, lower bound, and the gap between the two (i.e., $\MxGD(\mathcal{M^H}, \mathcal{M^R}) - \MnGD(\mathcal{M^H}, \mathcal{M^R})$). 



Now with the basic setting and metrics in place, we are ready to finally define the design problem:

\begin{defn}
\textit{A \textbf{human-robot goal-state alignment (HRGA)} design problem is characterized by the tuple, $\mathcal{DP} = \langle \mathcal{M^R}, \mathcal{M^H}, \mathbb{U}, \Lambda, \mathcal{C} \rangle$, where:}

\begin{itemize}
\item \textit{$\mathcal{M^R}$, $\mathcal{M^H}$, are the initial robot and human models.}
\item \textit{$\mathbb{U}$ is a set of available environment modifications or model updates. These may include changes to the state space, action preconditions, action effects, action costs, initial state, or goal.}
\item \textit{$\Lambda : \mathbb{M} \times \mathbb{U} \rightarrow \mathbb{M}$ is the transition function over a space of possible models. The function generates the model that would be obtained by performing the set of modifications on a given model.}
\item \textit{$\mathcal{C}$ is an additive cost function that maps each design modification in $\mathbb{U}$ to a cost.} 

\end{itemize}
\end{defn}

In the running example, model designs (we use the term design and environment modification interchangeably) may include moving objects like the watering pail or hose around the environment or removing them completely.

One could define various classes of solutions based on the metrics we have described above. The most basic one aims to minimize the design cost while requiring the lower bound and upper bound to fall below a specific threshold.
\begin{defn}
\label{def:kl}
\textit{A  $(l, k)$-bounded minimal solution to a $\mathcal{DP}$ is the cheapest subset of modifications\footnote{Note that this definition makes an implicit assumption that each design is independent and as such can be performed in any order.} $\xi$ that satisfies the following conditions:}
\begin{align*} 
\xi^* = \operatorname*{argmin}_{ \xi \in 2^\mathbb{U}} \mathcal{C}(\xi)
\end{align*}
\begin{align*} 
\textrm{\textit{Such that} }\MnGD(\mathcal{M_{\xi}^{R}}, \mathcal{M_\xi^{H}})\leq \ell \textrm{ \textit{and} }\MxGD(\mathcal{M}_{\xi}^{\mathcal{R}},\mathcal{M}_{\xi}^{\mathcal{H}})\leq k
\end{align*}
\textit{\textit{Where} $\mathcal{M_{\xi}^{R}} = \Lambda(\mathcal{M^{R}}, \xi)$ \textit{and} $\mathcal{M_{\xi}^{H}} = \Lambda(\mathcal{M^{H}}, \xi)$}
\end{defn}

Though we focus on a single instance problem setting (i.e., design for a unique goal specification for an initial state), one could easily envision settings where the robot may be required to carry out a number of different tasks, each corresponding to a different goal specification.
In such cases, the above definition can easily be extended to account for the multi-task nature of the setting. In particular, we can consider the max, min, or average of the $\MxGD$ and $\MnGD$ values across instances. The specific variation that may be used might depend on the nature of the setting. For example, if one were to create designs to account for the worst possible case across all instances, one would want to make sure that the max of $\MxGD$ and $\MnGD$ values across the instances fall below specific thresholds.

It is worth noting that the above definitions for the bounds consider a much larger set of plans than required. While the definitions consider all possible plans, humans may never consider most of those plans. For example, they might not think the robot would follow an extremely suboptimal plan.
This will result in weaker bounds, which could result in more extensive model updates than required. In the running example discussed above, regardless of where you place the hose, there will always be a plan where the robot could go fetch the hose and spray the plants. Thus, if one were to make changes to the model based purely on these metrics, only removing the hose completely from the setting would result in a setting where the use of the hose is not considered part of one of the possible outcomes. 

\section{Calculating $\MxGD$ and $\MnGD$}
The first order of business for us would be to calculate the approximations of $\GSD$, in particular, we will show how one could employ an off-the-shelf cost-optimal planner to calculate these values. The general idea we will employ is the fact that we will create a single planning problem which involves coming up with actions in the robot model and human model, and then finally having a set of check actions that check how the goal state is achieved under the human plan and model compared against the one achieved under the robot plan and model.

More formally, given the model pair $\mathcal{\lambda}= \langle \mathcal{M^R}, \mathcal{M^H}\rangle$, we create a new compiled model such that $\mathcal{M^\lambda}= \langle \mathcal{D^\lambda},  \mathcal{I^\lambda}, \mathcal{G^\lambda} \rangle$ where $\mathcal{D^\lambda}$ is the domain, defined by the tuple $\mathcal{D^\lambda} =  \langle \mathcal{F^\lambda}, \mathcal{A^\lambda} \rangle$. Here, $\mathcal{F^\lambda}$ is a set of fluents represented by $\mathcal{F^\lambda}= \mathcal{F^R} \cup \mathcal{F^H} \cup \mathcal{F^\theta} \cup \mathcal{F^\kappa}$, where $\mathcal{F^R}$ is the original set of fluents, and $\mathcal{F^H}$ is a copy of these fluents which correspond to the human's beliefs. We will use these copies to keep track of how the plan will unfold according to the human model and will use the notation $f_i^\mathcal{H}$ to denote the human copy of a fluent $f_i^\mathcal{R} \in \mathcal{F^R}$. 
$\mathcal{F^\theta}$ includes the housekeeping fluents ${robot\_can\_act}$ and ${human\_can\_act}$ which control when a human or robot can perform actions, and $\mathcal{F^\kappa}$ are a set of compare fluents. $\mathcal{F^\kappa}$ contains a compare fluent for every fluent in $\mathcal{F^R}$, i.e., $\exists f_i^\kappa \in \mathcal{F}^\kappa$, for every $f_i^\mathcal{R} \in \mathcal{F^R}$.
As discussed, our eventual objective is to compare the resultant goal states of the robot plan and a plan expected by the human. These compare fluents will allow us to track whether such comparisons have been performed.

$\mathcal{I^\lambda}$ is an initial state denoted by $\mathcal{I^\lambda} = \langle \mathcal{I^R} \cup \mathcal{I^H} \cup \{human\_can\_act\} \rangle$, where $\mathcal{I^R}$ is the robot's initial state, and $\mathcal{I^H}$ is a copy of this state, representative of the human's initial beliefs.
The inclusion of $\{human\_can\_act\}$, ensures that the plan can start with human actions.
$\mathcal{G^\lambda}$ is the set of goals shared by the human and robot, denoted by $\mathcal{G^\lambda} = \langle 
 \mathcal{G^R} \cup \mathcal{G^H} \cup \mathcal{F^\kappa} \rangle$. Here, $\mathcal{G^R} \subseteq \mathcal{F^R}$ is the goal specified in the original fluent set, and $\mathcal{G^H} \subseteq \mathcal{F^H}$ the same goal expressed in the human fluent copy, while $\mathcal{F^\kappa}$ are the original compare fluents, used to determine how similar the human and robot's final goal states are, once all goals have been achieved.

$\mathcal{A^\lambda}$ is a set of actions represented by $\mathcal{A^\lambda}= \langle \mathcal{A}^{\mathcal{R}'} \cup \mathcal{A}^{\mathcal{H}'} \cup \mathcal{A^\theta} \cup \mathcal{A^\kappa} \rangle$. 
Here, $\mathcal{A}^{\mathcal{R}'}$ is the action set corresponding to the robot actions $\mathcal{A^R}$. Here the action definitions are identical to their definitions in $\mathcal{A^R}$, except that for any $a \in \mathcal{A}^{\mathcal{R}'}$, you have ${robot\_can\_act} \in pre(a)$. Similarly, $\mathcal{A}^{\mathcal{H}'}$ is a copy of these actions corresponding to the human's beliefs of them (i.e., corresponds to their definitions in $\mathcal{A^H}$) expressed in $\mathcal{F^H}$. Additionally, for any $a \in \mathcal{A}^{\mathcal{H}'}$, you have ${human\_can\_act} \in pre(a)$. 

$\mathcal{A^\theta}$ are the special flip actions $a_{flip}^{\mathcal{R}}$ and $a_{flip}^{\mathcal{H}}$ which enable or disables a human or robot's ability to perform actions by changing the state of the $\mathcal{F^\theta}$ fluents. In our setting, the human begins with the ability to perform actions, while the robot does not. Once the human's goals have been achieved, their ability to perform actions is terminated, while the robot's are enabled. We define this specific human flip action, $\sharedPreP(a_{flip}^{\mathcal{H}})$, as follows:

\begin{itemize}
\item $ \sharedPreP(a_{flip}^{\mathcal{H}})/ \{human\_can\_act\} \subseteq
\mathcal{G^H}, \\ \sharedPreN(a_{flip}^{\mathcal{H}}) = \emptyset$:\\
  $\sharedAdd(a_{flip}^{\mathcal{H}}) = \{robot\_can\_act\}$,\\ $\sharedDel(a_{flip}^{\mathcal{H}}) = \{human\_can\_act\}$
\end{itemize}

Once the robot has achieved all of its goals, its ability to perform actions is disabled using the action $a_{flip}^{\mathcal{R}}$. We can define $a_{flip}^{\mathcal{R}}$ similar to the $a_{flip}^{\mathcal{H}}$.
These two actions ensure that the planner has identified a valid human and robot plan before performing all the check actions.



Once the human and robot have both executed their plans,
all fluent sets from their final goal states are compared, for which one check fluent action exists for each fluent in $\mathcal{F^R}$.  
$\mathcal{A^\kappa}$ is a set of compare actions that check for this consistency and is denoted by $\mathcal{A^\kappa} = A^\kappa_{f_1} \cup A^\kappa_{f_2} \cup A^\kappa_{f_3}...A^\kappa_{f_{|\mathcal{F}|}}$, 
such that the set $\mathcal{A}^\kappa_{f_i} = \{a^1_{f_i}, {a^2_{f_i}}, {a^3_{f_i}}, {a^4_{f_i}}\}$ exists for each $f_i^\mathcal{R} \in \mathcal{F^R}$.
We will call the first two copies the check disagreement actions for fact $f_i$ and the latter two the check agreement actions. The agreement copies will only fire if the human's belief about the fluent value matches the robot's, and the disagreement copy fires only in case they don't.
Additionally, we will modulate the cost parameters  $\mathcal{P}_1$ (agreement action cost) and $\mathcal{P}_2$ (disagreement action cost) to get different behaviors from the compilation. These are defined as follows:
\begin{itemize}
  \item $ \sharedPreP(a^1_{f_i}) = \{ f_i^\mathcal{R}\}, \\ \sharedPreN(a^1_{f_i}) = \{f_i^\mathcal{H}\} \cup \{robot\_can\_act\} \cup \{human\_can\_act\} \cup \{f_i^\kappa\}$,\\
  $\sharedAdd(a^1_{f_i}) = \{f_i^\kappa\}$,
  $\sharedDel(a^1_{f_i}) = \emptyset$, and $c(a^1_{f_i}) = \mathcal{P}_1$ 
  \item $ \sharedPreP(a^2_{f_i}) = \{ f_i^\mathcal{H}\}, \\ \sharedPreN(a^2_{f_i}) = \{f_i^\mathcal{R}\} \cup \{robot\_can\_act\} \cup \{human\_can\_act\} \cup \{f_i^\kappa\}$,\\
  $\sharedAdd(a^2_{f_i}) = \{f_i^\kappa\}$, 
  $\sharedDel(a^2_{f_i}) = \emptyset$, and $c(a^2_{f_i}) = \mathcal{P}_1$ 
  \item $ \sharedPreP(a^3_{f_i}) = \{ f_i^\mathcal{R}, f_i^\mathcal{H} \}, \\ \sharedPreN(a^3_{f_i}) = \{robot\_can\_act\} \cup \{human\_can\_act\}\cup \{f_i^\kappa\}$,\\
  $\sharedAdd(a^3_{f_i}) = \{f_i^\kappa\}$,  $\sharedDel(a^3_{f_i}) = \emptyset$, and $c(a^3_{f_i}) = \mathcal{P}_2$ 
  \item $ \sharedPreP(a^4_{f_i}) = \emptyset, \\
  \sharedPreN(a^4_{f_i}) = \{ f_i^\mathcal{R}, f_i^\mathcal{H}\}  \cup \{robot\_can\_act\} \cup \{human\_can\_act\}\cup\{f_i^\kappa\},$\\
  $\sharedAdd(a^4_{f_i}) = \{f_i^\kappa\}$,  $\sharedDel(a^4_{f_i}) = \emptyset$, and $c(a^4_{f_i}) = \mathcal{P}_2$ 
\end{itemize}

For a plan $\pi^{\lambda}$ that is valid for this new model $\mathcal{M^\lambda}$, we will use the notation $\mathcal{H}(\pi^{\lambda})$ to represent the sequence of human actions that appear in $\pi^{\lambda}$, and $\mathcal{R}(\pi^{\lambda})$ to represent the robot actions. Also, we will use the notation $\kappa^+(\pi^{\lambda})$ and $\kappa^-(\pi^{\lambda})$, to list the set of check agreement and check disagreement actions that appear in the plan.

One of the aspects of the model definition we haven't delved into is the action costs of the different actions. As we will see setting these costs to different values allows us to determine the values we are interested in. In general, we will assume that the cost of all actions in $\mathcal{A^\theta}$ are zero.

\begin{prop}
\label{prop:max}
    \textit{For a given compiled model $\mathcal{M^\lambda}$, let us set the action costs of all actions in $\mathcal{A}^{\mathcal{R}'} \cup \mathcal{A}^{\mathcal{H}'}$ to a unit cost, and set the disagreement cost as $\mathcal{P}_2 = 0$ and agreement cost as $\mathcal{P}_1 > 2^{|\mathcal{F^R}|+|\mathcal{F^H}|}$. For the given cost function, let $\pi^{\lambda}$ be an optimal plan, then $\MxGD(\mathcal{M^R}, \mathcal{M^H}) = |\kappa^-(\pi^{\lambda})|$.}
\end{prop}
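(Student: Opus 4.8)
The plan is to show that the compilation converts the combinatorial maximization defining $\MxGD$ into a single cost-optimal planning problem whose optimal cost is dominated by the number of \emph{agreement} check actions, so that minimizing cost is forced to maximize the number of \emph{disagreement} check actions $\kappa^-(\pi^\lambda)$. Concretely, I would establish three things: (i) every valid plan $\pi^\lambda$ of $\mathcal{M^\lambda}$ has a forced phase structure decomposing it into a human sub-plan, a robot sub-plan, and a block of check actions; (ii) this decomposition is in correspondence with pairs of valid human and robot plans, and along this correspondence $|\kappa^-(\pi^\lambda)| = |\GSD(\mathcal{H}(\pi^\lambda),\mathcal{M^H},\mathcal{R}(\pi^\lambda),\mathcal{M^R})|$; and (iii) a cost-separation argument that uses the magnitude of $\mathcal{P}_1$ to guarantee the optimal plan realizes the \emph{maximum} divergence.

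First I would pin down the structure of valid plans. Because only $\{human\_can\_act\}$ holds initially and the actions in $\mathcal{A}^{\mathcal{H}'}$, $\mathcal{A}^{\mathcal{R}'}$, and $\mathcal{A^\kappa}$ are gated by the housekeeping fluents, the flip actions force a strict ordering: all human-copy actions precede $a_{flip}^{\mathcal{H}}$ (which requires $\mathcal{G^H}$ and hands control to the robot), all robot-copy actions lie between the two flips, and every check action follows $a_{flip}^{\mathcal{R}}$, since each check action carries both $\{robot\_can\_act\}$ and $\{human\_can\_act\}$ among its negative preconditions. As the human and robot copies act on the disjoint fluent sets $\mathcal{F^H}$ and $\mathcal{F^R}$ from the copied initial states $\mathcal{I^H}$ and $\mathcal{I^R}$, $\mathcal{H}(\pi^\lambda)$ projects to a valid plan of $\mathcal{M^H}$ reaching $\mathcal{G^H}$ and $\mathcal{R}(\pi^\lambda)$ to a valid plan of $\mathcal{M^R}$ reaching $\mathcal{G^R}$; conversely any such pair assembles into a valid $\pi^\lambda$. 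Since $\mathcal{F^\kappa} \subseteq \mathcal{G^\lambda}$ forces every compare fluent $f_i^\kappa$ to be set, and for each $f_i$ exactly one of the four check actions is applicable in the final state (their preconditions on $f_i^\mathcal{R}, f_i^\mathcal{H}$ are mutually exclusive and all share $f_i^\kappa$ as a negative precondition), exactly one check action fires per fluent, a disagreement action firing precisely when $f_i^\mathcal{R}$ and $f_i^\mathcal{H}$ differ in the final states. Hence $|\kappa^+(\pi^\lambda)| + |\kappa^-(\pi^\lambda)| = |\mathcal{F^R}|$, and $|\kappa^-(\pi^\lambda)|$ equals the symmetric difference of the two final goal states, i.e. the $\GSD$ of the corresponding plan pair.

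Next comes the cost accounting. With flip actions free, disagreement cost $\mathcal{P}_2 = 0$, and each human/robot action of unit cost, the total cost of a valid plan is $c(\pi^\lambda) = |\mathcal{H}(\pi^\lambda)| + |\mathcal{R}(\pi^\lambda)| + \mathcal{P}_1\,|\kappa^+(\pi^\lambda)|$, where $|\kappa^+(\pi^\lambda)| = |\mathcal{F^R}| - |\kappa^-(\pi^\lambda)|$. The key point is that $\mathcal{P}_1$ is large enough that one extra agreement action outweighs all mover actions combined: a divergence-maximizing pair may be taken loopless (since $\GSD$ depends only on final states, cycles can be deleted), so $|\mathcal{H}| + |\mathcal{R}| \le 2^{|\mathcal{F^H}|} + 2^{|\mathcal{F^R}|} \le 2^{|\mathcal{F^R}|+|\mathcal{F^H}|} < \mathcal{P}_1$. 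Thus if some optimal $\pi^\lambda$ had $|\kappa^-(\pi^\lambda)| = d < \MxGD(\mathcal{M^R},\mathcal{M^H})$, then a loopless witness $\pi'$ realizing divergence $\MxGD$ would satisfy $c(\pi') < \mathcal{P}_1(|\mathcal{F^R}| - \MxGD) + \mathcal{P}_1 \le \mathcal{P}_1(|\mathcal{F^R}| - d) \le c(\pi^\lambda)$, contradicting optimality; since $|\kappa^-(\pi^\lambda)| \le \MxGD$ always holds by step (ii), equality follows.

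The main obstacle is the cost-separation step: I must justify the lexicographic priority, namely that minimizing total cost first minimizes agreement actions (equivalently maximizes $|\kappa^-|$) and only then the number of mover actions. This rests on two points needing care — that it suffices to compare $\pi^\lambda$ against a \emph{loopless} witness, so that $|\mathcal{H}|+|\mathcal{R}|$ is bounded by the reachable-state count, and that the threshold $\mathcal{P}_1 > 2^{|\mathcal{F^R}|+|\mathcal{F^H}|}$ strictly exceeds that bound so a single saved agreement action always dominates. A secondary point to handle cleanly is confirming that the forced plan structure is genuinely rigid, i.e. that the flips cannot re-enable an already-finished agent and thereby interleave phases, which I would verify directly from the $\mathcal{F^\theta}$ preconditions and effects of the flip and check actions.
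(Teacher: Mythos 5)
Your proof is correct and takes essentially the same route as the paper's own (far terser) argument: the paper's proof is exactly your cost-separation step, namely that a single agreement action with cost $\mathcal{P}_1 > 2^{|\mathcal{F^R}|+|\mathcal{F^H}|}$ outweighs the combined cost of the longest loopless human/robot plan pair, so a cost-optimal plan must maximize the number of disagreement checks. Your items (i) and (ii) — the forced phase structure and the correspondence between compiled plans and valid human/robot plan pairs with $|\kappa^-(\pi^\lambda)|$ equal to the divergence — are prerequisites the paper leaves implicit, and you have filled them in correctly.
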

\begin{proof}
    Now this comes from the fact that the cost of the plan is being dominated by check agreement actions. In particular, the cost of a single agreement action is higher than the combined cost of the longest possible plan in either the human or robot model (i.e., one that passes through each possible state). By setting the cost of agreement so high, we force the planner to select plans with a low degree of agreement.
\end{proof}

We can similarly calculate the $\MnGD(\mathcal{M^R, M^H})$, by inverting the costs, specifically:

\begin{prop}
    \textit{For a given compiled model $\mathcal{M^\lambda}$, let us set the action costs of all actions in $\mathcal{A}^{\mathcal{R}'} \cup \mathcal{A}^{\mathcal{H}'}$ to a unit cost, and set the agreement cost as $\mathcal{P}_1 = 0$ and disagreement cost as $\mathcal{P}_2 > 2^{|\mathcal{F^R}|+|\mathcal{F^H}|}$. 
    For the given cost function, let $\pi^{\lambda}$ be an optimal plan, then $\MnGD(\mathcal{M^R}, \mathcal{M^H}) = |\kappa^-(\pi^{\lambda})|$.}
\end{prop}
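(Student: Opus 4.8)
The plan is to mirror the argument for Proposition \ref{prop:max} almost verbatim, with the cost priorities inverted so that the optimal plan is driven toward \emph{agreement} rather than away from it. The backbone is the same structural invariant enforced by the compilation: because every action in $\mathcal{A}^{\mathcal{R}'}$ (resp.\ $\mathcal{A}^{\mathcal{H}'}$) carries $robot\_can\_act$ (resp.\ $human\_can\_act$) in its preconditions, and every check action in $\mathcal{A^\kappa}$ lists both housekeeping fluents among its negative preconditions, any valid plan $\pi^{\lambda}$ must factor into a human phase, the flip $a_{flip}^{\mathcal{H}}$, a robot phase, the flip $a_{flip}^{\mathcal{R}}$, and finally a block of check actions. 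Since $a_{flip}^{\mathcal{H}}$ fires only once $\mathcal{G^H}$ holds and $a_{flip}^{\mathcal{R}}$ only once $\mathcal{G^R}$ holds, $\mathcal{H}(\pi^{\lambda})$ and $\mathcal{R}(\pi^{\lambda})$ are genuine valid plans of $\mathcal{M^H}$ and $\mathcal{M^R}$; moreover, because the two phases touch disjoint fluent sets ($\mathcal{F^H}$ versus $\mathcal{F^R}$), their final states may be chosen independently, so the compilation ranges over exactly the pairs $(\pi^1,\pi^2)\in \Pi_{\mathcal{M^H}}\times\Pi_{\mathcal{M^R}}$ quantified in the definition of $\MnGD$.

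First I would establish the per-fluent bookkeeping lemma. For each $f_i$, every one of $a^1_{f_i},\dots,a^4_{f_i}$ adds $f_i^\kappa$ and lists $f_i^\kappa$ among its negative preconditions, so at most one of the four can ever fire; since $f_i^\kappa\in\mathcal{F^\kappa}\subseteq\mathcal{G^\lambda}$, at least one must fire. Inspecting the (dis)agreement preconditions shows that, for the fixed pair of final states reached by $\mathcal{H}(\pi^{\lambda})$ and $\mathcal{R}(\pi^{\lambda})$, exactly one of the four is applicable: a disagreement copy ($a^1$ or $a^2$) precisely when $f_i^\mathcal{R}$ and $f_i^\mathcal{H}$ differ, and an agreement copy ($a^3$ or $a^4$) precisely when they coincide. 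Hence $|\kappa^-(\pi^{\lambda})|$ equals the number of fluents on which the two induced goal states disagree, i.e.\ $|\kappa^-(\pi^{\lambda})| = |\GSD(\mathcal{H}(\pi^{\lambda}),\mathcal{M^H},\mathcal{R}(\pi^{\lambda}),\mathcal{M^R})|$, with $|\kappa^+(\pi^{\lambda})| + |\kappa^-(\pi^{\lambda})| = |\mathcal{F^R}|$.

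The final step is the cost/dominance argument, now run for minimization. The agreement copies are made free ($\mathcal{P}_1=0$) and each disagreement copy is given cost $\mathcal{P}_2 > 2^{|\mathcal{F^R}|+|\mathcal{F^H}|}$, so the cost of any plan is $\mathcal{P}_2\,|\kappa^-(\pi^{\lambda})|$ plus the unit-cost contribution of its robot and human actions (flip actions being free). The crucial inequality is that a single disagreement action is more expensive than \emph{any} complete robot-plus-human action sequence an optimal plan could use: a cost-optimal plan never revisits a joint configuration, so its robot/human prefix is shorter than the number of joint states $2^{|\mathcal{F^R}|+|\mathcal{F^H}|}$, and hence strictly cheaper than one $\mathcal{P}_2$. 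Consequently the optimal plan is forced to minimize $|\kappa^-(\pi^{\lambda})|$ lexicographically before anything else. Combining this with the bookkeeping lemma and the independence of the two phases, the optimal $|\kappa^-(\pi^{\lambda})|$ equals the minimum of $|\GSD|$ over all admissible $(\pi^1,\pi^2)$, which is exactly $\MnGD(\mathcal{M^R},\mathcal{M^H})$.

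I expect the main obstacle to be making this dominance step airtight: I must rule out that padding the robot/human phases, or inserting free agreement/flip structure, could let a higher-disagreement plan masquerade as cheaper. This reduces to the observation that disagreement actions are the only nonzero-cost terms scaling with $|\kappa^-|$ and that an optimal plan's unit-cost prefix is acyclic, hence bounded by the joint state count, so the lexicographic priority cannot be upset by the lower-order term. Everything else is symmetric to Proposition \ref{prop:max}, so I would present the argument by appealing to that proof and noting only the inversion $\mathcal{P}_1=0$, $\mathcal{P}_2$ large, together with the fact that here minimizing the disagreement count minimizes, rather than maximizes, the divergence.
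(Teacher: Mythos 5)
Your proof is correct and takes essentially the same approach as the paper: the paper disposes of this proposition by declaring it identical to Proposition~\ref{prop:max}, whose proof is exactly your cost-dominance argument --- a single high-cost check action (here the disagreement copy, with $\mathcal{P}_2 > 2^{|\mathcal{F^R}|+|\mathcal{F^H}|}$) outweighs any loop-free human/robot action prefix, so an optimal plan must realize the fewest disagreement checks possible, i.e.\ $|\kappa^-(\pi^{\lambda})| = \MnGD(\mathcal{M^R},\mathcal{M^H})$. Your write-up simply makes explicit the supporting structure (phase decomposition via the flip actions, exactly one check action per fluent, and independence of the two phases guaranteeing the compilation ranges over all plan pairs) that the paper's one-paragraph sketch leaves implicit.
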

The proof is identical to the previous proposition.
\paragraph{Remark}One of the additional constraints we are placing on solutions to this problem is the requirement that human actions be performed before any robot actions. This is technically not a requirement for the validity of the compilation. If we had added $\{robot\_can\_act\}$ to the initial state, the compilation will allow for human and robot actions to be interleaved or picked in any order. We will refer to this version as the flattened version of the compilation. Flattening the compilation will allow for more solutions but at the cost of increasing the branching factor. One of the evaluations, we will perform is whether the two versions have any significant difference in computational characteristics.

\section{Identifying Minimal Designs for HRGA}
Now that we have methods for computing the $\MxGD$ and $\MnGD$ bounds, the question remains as to how to select the designs that will allow us to create models with the required properties for a given HRGA $\mathcal{DP}$.
In particular, we are interested in identifying designs that meet the requirements laid out in Definition \ref{def:kl}. However, rather than laying out the most general version, we will look at a specific instantiation of the definition that will allow us to use an even more efficient compilation than the version laid out in the previous section. In particular, we are interested in settings, where $\ell=0$ (the limit on the lower bound) and the set of design changes provided as input, correspond to adding or removing unique fluents from the human/robot initial states and each design has a unit cost. 

Here, we only allow initial state changes because, for most practical problem settings, initial state changes are the easiest changes the designer could make. From a theoretical point of view, one could always map changes to any other model component into an initial state change. For example, by making them conditioned on a static predicate, whose value is determined in the initial state.

The basic algorithm will have two loops, the outer loop will iteratively increase the allowed design cost. The inner loop will try to identify a design for the given budget constraint that will meet the requirement of $\MnGD$ of zero, and $\MxGD$ within some specified limit.



\begin{algorithm}[tbp!]
\footnotesize
\caption{An algorithm for a HRGA design problem}
\label{algo1}
\vspace{2pt} 
\begin{algorithmic}[1]
\STATE {\em Input:} $\mathcal{DP}, k$
\STATE {\em Output:} Model update set $\mathcal{U}\subseteq \mathbb{U}$ that satisfy the requirements that for resulting models $\MnGD$ is zero and $\MxGD$ is $k$


\FOR{$\tau$ in 1 ... $|\mathcal{F^R}|$ }

\STATE $all\_designs\_found \gets False$
\STATE $found\_designs \gets \{\}$

\WHILE{$all\_designs\_found$ is $false$}
    
    \STATE $ \mathcal{M}^{\lambda}_{\mathbb{U}}\gets \MnGD\_with\_Design(\mathcal{M^R}, \mathcal{M^H}, \tau$\\
~~~~~~~~~~~~~~~~~~~~~~~~~~~~~~~~~~~~~~~~~~~~~~~~~~$, found\_designs)$ 
    
    \STATE{$ \pi^\lambda_\mathbb{U} \gets GetPlan(\mathcal{M}^{\lambda}_{\mathbb{U}})$}
    
    \IF{$\pi^\lambda_\mathbb{U}$ length is $0$}
        \STATE $ all\_designs\_found\gets True $
    \ELSE
        \STATE Extract model updates $\mathcal{U}$ from $\pi^\lambda_\mathbb{U}$ and add it to $found\_designs$
        \IF{$\MxGD(\Lambda(\mathcal{M^R}, \mathcal{U}), \Lambda(\mathcal{M^H}, \mathcal{U}))$ is $k$}
        \RETURN  $\mathcal{U}$ 
            \ENDIF
        \ENDIF
\ENDWHILE
\ENDFOR
\end{algorithmic}
\end{algorithm}

\subsection{Inner Loop for Identifying Designs}
In the inner loop, we will follow a slightly modified version of $\MnGD$ to identify the design itself. For a set of possible designs $\mathbb{U}$, we can map each design to a specific addition or removal to the initial state for the human and robot model. Let $\tau$ be the current limit placed on the design size. The basic intuition here is that we will modify the $\MnGD$ compilation to first perform a set of actions corresponding to design changes. The design actions are disabled to calculate the human and robot plan that results in $\MnGD$ with $0$. We can directly encode this into the goal by looking for plans where all the fluent values match. We check whether the identified design allows the required $k$ bound on the $\MxGD$. If not, we look for another design with $\tau$ length which satisfies the $\MnGD=0$ requirement. We do this by updating the $\MnGD$ compilation to disallow previously identified designs.

We will extend our previous compiled model as follows, $\mathcal{M}^\lambda$ to $\mathcal{M}^\lambda_{\mathbb{U}} = \langle  \mathcal{D}^\lambda_{\mathbb{U}}, \mathcal{I}^\lambda_{\mathbb{U}}, \mathcal{G}^\lambda_{\mathbb{U}} \cup \{unseen\_design\}\rangle$ where $\mathcal{D}^\lambda_{\mathbb{U}}= \langle\mathcal{F}^\lambda_{\mathbb{U}}, \mathcal{A}^\lambda_{\mathbb{U}},c^{\lambda}_{\mathbb{U}}$$\rangle$. In this new model, we have $\mathcal{F}^\lambda_{\mathbb{U}} = \langle \mathcal{F^\lambda} \cup \{design\_allowed\} \cup \{unseen\_design\} \cup \mathcal{F^\tau} \cup\mathcal{F^{\tau+}} \cup \mathcal{F^{D}} \rangle$. Where $\{design\_allowed\}$ is used to keep track of when designs are allowed and $\{unseen\_design\}$ ensures that the current design used hasn't been used before. The fluent sets $\mathcal{F^\tau}$ and $\mathcal{F^{\tau+}}$ ensures only $\tau$ designs can be performed. Finally, $|\mathcal{F^{D}}|= |\mathbb{U}|$ keeps track of what exact designs were used. 

For the actions we have,
$\mathcal{A}^\lambda_{\mathbb{U}} = \langle \mathcal{A^\lambda}\cup \mathcal{A}^{\mathbb{U}} \setminus \mathcal{A^{\kappa-}}\cup \{design\_completed\} \rangle$. Where $|\mathcal{A}^{\mathbb{U}}| = \tau \times |\mathbb{U}| $, is the set of actions you have for the design, and $\mathcal{A^{\kappa-}}$ is the subset of check disagreement copies. Each design action updates the initial state per the design requirements. $\{design\_completed\}$ stops the design phase and allows the human and robot actions to be applied (from there on out, the actions are the same as the previous compilation). By not including the disagreement copy, we will look for robot/human plan pairs that can only satisfy the original check goal by using agreement actions (hence, the states need to match).

The new initial state is given as $\mathcal{I}^\lambda_\mathbb{U} = (\mathcal{I^\lambda}\setminus \{human\_can\_act\}) $$\cup \{unseen\_design, design\_allowed\} \cup \mathcal{F^\tau}$. Therefore, you can only start with design steps (which can be performed at most $k$ steps).
 
For the new design action, there exists an action for each possible design step (upper-bounded by $k$) and a design.
For a design related to fluent $f$ and step $i$, the positive precondition of the action would be $\sharedPreP(a) = \{design\_allowed, k_i\}$. If the design corresponds to making an initial state true, that fluent is part of the add effect, if makes a fluent false it becomes part of the delete effect. The action will always remove $t_i \in \mathcal{F^\tau}$ and add $t_i^+ \in \mathcal{F^{\tau+}}$ as well as the corresponding design. 
Now the goal is given as $\mathcal{G}^\lambda_{\mathbb{U}} = \langle \mathcal{G^\lambda} \cup \mathcal{F^{\tau+}} \rangle$. The $\{design\_completed\}$ action simply deletes $\{design\_allowed\}$ and adds the fluent $\{human\_can\_act\}$.
 Now the addition of $\mathcal{F^{\tau+}}$ means that $\tau$ design needs to be applied. The cost function is kept the same as $\mathcal{M^\lambda}$.
 A solution to this problem allows us to identify designs that result in zero $\MnGD$ automatically, and we can subsequently check $\MxGD$. If $\MxGD$ requirements are met, we know that this corresponds to the minimal cost design, and the solution is returned.
\begin{table*}[ht]
\centering
\setlength\tabcolsep{3pt}
\begin{tabular}{|c|c|c|c|c|c|c|}
\hline
Domain &
  \texttt{Main} &
 \texttt{Main-fl} &
\texttt{Naive} &
\texttt{$\MnGD$} & 
\texttt{$\MnGD$ with Design} &
\texttt{$\MxGD$}\\ \hline
\multirow{5}{*}{Blocksworld}
& 73.849 $\pm$ 2.150 & 73.172 $\pm$ 2.281 & 387.178 $\pm$ 6.724 & 11.703 $\pm$ 1.264 & 12.955 $\pm$ 0.469 & 12.642 $\pm$ 1.461 \\
& 71.966 $\pm$ 3.183 & 74.115 $\pm$ 3.570 & 386.627 $\pm$ 4.365 & 11.674 $\pm$ 1.165 & 11.739 $\pm$ 2.044 & 12.893 $\pm$ 2.097 \\
& 111.919 $\pm$ 30.519 & 110.049 $\pm$ 27.237 & 432.541 $\pm$ 24.484 & 12.420 $\pm$ 5.547 & 11.883 $\pm$ 0.250 & 30.181 $\pm$ 9.809 \\
& 97.049 $\pm$ 1.961 & 96.051 $\pm$ 1.213 & 417.206 $\pm$ 3.636 & 11.942 $\pm$ 0.760 & 12.748 $\pm$ 0.595 & 35.035 $\pm$ 1.308 \\
& 118.487 $\pm$ 28.660 & 116.327 $\pm$ 30.162 & 453.887 $\pm$ 21.167 & 13.038 $\pm$ 6.836 & 12.505 $\pm$ 0.529 & 25.932 $\pm$ 12.534 \\
 \hline
\multirow{5}{*}{Depot}
& 35.593 $\pm$ 2.338 & 31.877 $\pm$ 2.929 & 188.556 $\pm$ 10.304 & 5.747 $\pm$ 1.517 & 5.234 $\pm$ 1.096 & 5.006 $\pm$ 0.465 \\
& 86.355 $\pm$ 0.810 & 85.794 $\pm$ 1.029 & 448.649 $\pm$ 4.008 & 13.530 $\pm$ 0.761 & 13.991 $\pm$ 0.293 & 16.285 $\pm$ 0.564 \\
& 84.901 $\pm$ 0.976 & 84.861 $\pm$ 1.396 & 446.248 $\pm$ 0.784 & 13.463 $\pm$ 0.633 & 14.149 $\pm$ 0.860 & 15.288 $\pm$ 0.604 \\
& 85.238 $\pm$ 1.170 & 85.853 $\pm$ 0.528 & 445.714 $\pm$ 2.721 & 13.455 $\pm$ 0.702 & 13.601 $\pm$ 0.368 & 15.479 $\pm$ 0.673 \\
& 86.531 $\pm$ 2.873 & 84.731 $\pm$ 1.181 & 452.672 $\pm$ 3.351 & 13.648 $\pm$ 0.668 & 14.428 $\pm$ 1.613 & 15.723 $\pm$ 0.205 \\ \hline
\multirow{5}{*}{Elevator}
& 3.691 $\pm$ 0.013 & 3.629 $\pm$ 0.009 & 17.930 $\pm$ 0.052 & 0.543 $\pm$ 0.008 & 0.607 $\pm$ 0.008 & 0.561 $\pm$ 0.001 \\
& 4.116 $\pm$ 0.013 & 4.070 $\pm$ 0.008 & 19.708 $\pm$ 0.018 & 0.597 $\pm$ 0.011 & 0.676 $\pm$ 0.011 & 0.610 $\pm$ 0.001 \\
& 4.119 $\pm$ 0.009 & 4.085 $\pm$ 0.020 & 19.771 $\pm$ 0.072 & 0.599 $\pm$ 0.011 & 0.674 $\pm$ 0.002 & 0.611 $\pm$ 0.002 \\
& 4.123 $\pm$ 0.011 & 4.066 $\pm$ 0.013 & 19.843 $\pm$ 0.065 & 0.601 $\pm$ 0.011 & 0.673 $\pm$ 0.002 & 0.615 $\pm$ 0.003 \\
& 4.118 $\pm$ 0.008 & 4.068 $\pm$ 0.006 & 19.796 $\pm$ 0.063 & 0.600 $\pm$ 0.010 & 0.672 $\pm$ 0.002 & 0.611 $\pm$ 0.001 \\ \hline
\multirow{5}{*}{Logistics}
& 33.062 $\pm$ 0.738 & 32.330 $\pm$ 0.337 & 50.306 $\pm$ 0.582 & 0.886 $\pm$ 2.536 & 0.807 $\pm$ 0.017 & 28.785 $\pm$ 0.755 \\
& 31.936 $\pm$ 0.495 & 30.381 $\pm$ 0.158 & 48.112 $\pm$ 0.643 & 0.684 $\pm$ 0.026 & 0.814 $\pm$ 0.016 & 27.577 $\pm$ 0.485 \\
& 30.457 $\pm$ 0.408 & 30.457 $\pm$ 0.209 & 47.927 $\pm$ 0.554 & 0.676 $\pm$ 0.023 & 0.804 $\pm$ 0.009 & 26.194 $\pm$ 0.393 \\
& 32.795 $\pm$ 0.488 & 32.824 $\pm$ 0.552 & 50.068 $\pm$ 0.469 & 0.684 $\pm$ 0.024 & 0.819 $\pm$ 0.018 & 28.455 $\pm$ 0.467 \\
& 30.439 $\pm$ 0.521 & 30.641 $\pm$ 0.414 & 48.040 $\pm$ 0.403 & 0.683 $\pm$ 0.023 & 0.806 $\pm$ 0.015 & 26.152 $\pm$ 0.469 \\ \hline
\multirow{5}{*}{Zenotravel}
& 5.084 $\pm$ 0.025 & 5.025 $\pm$ 0.017 & 23.641 $\pm$ 0.070 & 0.716 $\pm$ 0.009 & 0.791 $\pm$ 0.005 & 0.745 $\pm$ 0.005 \\
& 5.167 $\pm$ 0.024 & 5.142 $\pm$ 0.019 & 24.022 $\pm$ 0.157 & 0.728 $\pm$ 0.013 & 0.807 $\pm$ 0.002 & 0.755 $\pm$ 0.003 \\
& 7.025 $\pm$ 0.041 & 6.953 $\pm$ 0.045 & 31.263 $\pm$ 0.160 & 0.944 $\pm$ 0.016 & 1.044 $\pm$ 0.012 & 1.081 $\pm$ 0.004 \\
& 7.210 $\pm$ 0.066 & 7.164 $\pm$ 0.066 & 31.468 $\pm$ 0.126 & 0.949 $\pm$ 0.017 & 1.063 $\pm$ 0.009 & 1.148 $\pm$ 0.010 \\
& 10.021 $\pm$ 0.662 & 9.986 $\pm$ 0.636 & 40.853 $\pm$ 8.818 & 1.367 $\pm$ 0.026 & 1.496 $\pm$ 0.010 & 1.510 $\pm$ 0.015 \\
  \hline
  \end{tabular}
  \caption{The average and standard deviation time taken by each method compared to each baseline in seconds per instance. The first three columns respectively present the time taken by our method, a variation of our method that doesn't enforce ordering, and a baseline that iterates over possible designs. The final three columns report the average time taken to compute the lower bound of \GSD, lower bound with design, and upper bound.}
  \label{tab1}
  \end{table*}


If this is not the case, we would want to disallow it and look for other designs of size $\tau$ that might suit our requirements. We will do this by introducing new conditional effects into the $\{design\_completed\}$ action, such that the condition for that effect corresponds to the design fluents of a previously identified design and the effect is to delete the $\{design\_completed\}$ fluent. 

Once the updated $\mathcal{M}^\lambda_{\mathbb{U}}$ no longer returns a solution, we know that no other minimal designs of that budget satisfy this requirement and the control is passed to the outer loop for checking a larger design budget. Algorithm \ref{algo1} provides a pseudo-code for this algorithm. $found\_designs$ is a set that is used to track all the previously found designs for the current design budget, and $all\_designs\_found$ is a flag that captures whether the algorithm has exhausted the space of all designs that can ensure a $\MnGD$ of zero.


\section{Evaluation}
Our empirical evaluation objective was to provide a computational characterization of the different approaches to computing $\MxGD$ and $\MnGD$ measures and to perform designs that were introduced in this paper. 


\paragraph{Dataset} We looked at five standard IPC domains \cite{IPC2000, IPC2002}, and converted five problem instances from each into variations of a goal state divergence problem. To help minimize our problem run-time and potential planner issues, instances with smaller initial states were chosen. 

To create the human and robot models, these instances were first duplicated. For each instance, five problem variations containing human and robot models were then created. All original robot problem instances were kept the same as the original IPC problem instance. We created the human problem instance by deleting five random initial state fluents from the initial state of the original instance. This means five problem variations for each problem instance for each domain were created, for a total of 25 problem variations created per domain. All values listed in the table are averaged across these five randomly generated problem variations. This ensured that there was always a design set of size five within the required $\MxGD$ and $\MnGD$ limits. We also removed the zoom action from Zenotravel to avoid large variations in the fuel level fluents.






For consistency, we considered an $\MxGD$ limit of 0 as well. This allowed us to frame the calculation $\MxGD$ for the problem as checking whether the $\MxGD$ compilation is unsolvable if we force the plan to have at least one disagreement action. This allows us to perform cross-domain comparisons while keeping the $\MxGD$ constant and also avoid the use of costlier cost-optimal planners. 

To guarantee that we had problems with the required $\MxGD$ limit, we updated the goal specification of the problem instances selected from previous IPC competitions so all plans would result in the same goal state. 


\paragraph{Setup} 
We implemented the compilations for individually computing baselines of  $\MxGD$ and $\MnGD$ as well as the updated $\MnGD$ compilation that also identifies the design. 
We also implement a simple breadth-first search over the design space for the baseline. 

For design, our primary points of comparison will be our proposed algorithm (referred to as \texttt{Main}) and a naive one (\texttt{Naive}) that merely iterates over all possible designs and tests whether the designs result in zero upper and lower bounds. We will also consider a variation of the \texttt{Main} that considers the flattened compilation (\texttt{Main-fl}).

For each of these primary design algorithms (i.e., \texttt{Main}, \texttt{Main-fl}, and \texttt{Naive}), the time listed is the total time taken to find the minimal design that will ensure the upper and lower bounds are zero. As such, this
involves solving for upper and lower bounds multiple times.
Conversely, the times listed for $\MxGD$, $\MnGD$, and $\MnGD$ with design is the average time taken to compute each of these bounds individually (with ordering constraints enforced). To the best of our knowledge, we are the
first to tackle this problem, and we are unaware of any
existing baselines to compare this work against. Thus, we only consider baselines that provide the minimal design
for the target upper and lower bounds but with potentially different computational overheads (we have also provided a characterization of the hardness of calculating these bounds).

For each domain, we tested the three conditions on each instance, and we used Lama \cite{Richter_2010} for solving all compilations. All experiments were performed on a computer with an Apple M2 Max chip and 64 GB Ram. All experiments were run with a time limit of 60 minutes.

\paragraph{Results} Our primary metric is the time taken by each approach. Accordingly, Table \ref{tab1} presents the average and standard deviation time in seconds taken per each instance reported. 
Across all problems, we see that our \textbf{\texttt{Main} and \texttt{Main-fl} methods take a significantly much shorter time than the baseline}. For, \texttt{Main} and \texttt{Main-fl} were mostly comparable, with small variation between instances.
As such, we see that enforcing the ordering does provide a small improvement over the flattened compilation. 
Presumably, this is due to the fact that adding the additional structure would reduce the branching factor.
Also, compared with the naive baseline, which does not leverage planning to identify the design, takes a significantly shorter time. We note that the most noticeable benefit is for the Depot domain. We also note that the addition of design into $\MnGD$ compilation adds minimal overhead. Finally, the $\MxGD$ times are, in general, higher than $\MnGD$ times. However, this is expected since, in this setting, $\MxGD$ corresponds to testing for unsolvability.

\section{Conclusion}
This paper presents the first attempt at developing a design framework to help align human expectations about how a goal specification may be achieved with the actual outcomes of a robot plan selected to satisfy the specification. Our focus in this paper has been to provide a clear framework to understand and study environment design within this context. As alluded to in the paper, the specific design problem we study is one among a number of different problems we could study in this space. In future work, we hope to explore some of these works and also look at studying these problems in the context of more complex decision-making frameworks. In particular, we would be interested in seeing how to adapt these mechanisms to support more complex objective/preference specification mechanisms including various forms of temporal logic and reward functions.

\bibliographystyle{IEEEtran}
\bibliography{library}

\end{document}